\documentclass[conference]{IEEEtran}

\usepackage{amsmath,amssymb,amsfonts}
\usepackage{algorithmic}
\usepackage{graphicx}
\usepackage{textcomp}
\usepackage{xcolor}
\usepackage{amsthm}
\usepackage{booktabs}
\usepackage{multirow}
\usepackage{array}
\usepackage[hyphens]{url}
\usepackage{cite}
\usepackage[hidelinks]{hyperref}

\newtheorem{lemma}{Lemma}[section]
\newtheorem{theorem}{Theorem}[section]

\newtheorem{definition}{Definition}[section]
\newtheorem{corollary}{Corollary}[section]

\def\BibTeX{{\rm B\kern-.05em{\sc i\kern-.025em b}\kern-.08em
    T\kern-.1667em\lower.7ex\hbox{E}\kern-.125emX}}

\begin{document}

\title{BitRL: Reinforcement Learning with 1-bit Quantized Language Models for Resource-Constrained Edge Deployment}

\author{%
  \IEEEauthorblockN{Md.~Ashiq Ul Islam Sajid}
  \IEEEauthorblockA{Department of Computer Science\\
    BRAC University\\
    Dhaka 1212, Bangladesh\\
    md.ashiq.ul.islam.sajid@g.bracu.ac.bd}
  \and
  \IEEEauthorblockN{Mohammad Sakib Mahmood}
  \IEEEauthorblockA{Department of Computer Science\\
    Missouri State University\\
    Springfield, MO, USA\\
    mm974s@missouristate.edu}
  \and
  \IEEEauthorblockN{Md.~Tareq Hasan}
  \IEEEauthorblockA{Department of Computer Science \& Engineering\\
    Prime University\\
    Dhaka, Bangladesh\\
    tareq.cse@primeuniversity.edu.bd}
  \and
  \IEEEauthorblockN{Md Abdur Rahim}
  \IEEEauthorblockA{Department of Computer Science \& Engineering\\
    Prime University\\
    Dhaka, Bangladesh\\
    marahim.cseju@gmail.com}
  \and
  \IEEEauthorblockN{Rafat Ara}
  \IEEEauthorblockA{Department of Computer Science \& Engineering\\
    German University Bangladesh\\
    Dhaka, Bangladesh\\
    rafatara.cse@gmail.com}
  \and
  \IEEEauthorblockN{Md.~Arafat Hossain}
  \IEEEauthorblockA{Department of Information Technology\\
    Jahangirnagar University\\
    Dhaka, Bangladesh\\
    arafathossain@datrolab.com}
}

\maketitle

\begin{abstract}
The deployment of intelligent reinforcement learning (RL) agents on resource-constrained edge devices remains a fundamental challenge due to the substantial memory, computational, and energy requirements of modern deep learning systems. While large language models (LLMs) have emerged as powerful cognitive architectures for decision-making agents, their multi-billion parameter scale confines them to cloud-based deployment, raising concerns about latency, privacy, and connectivity dependence. We introduce BitRL, a comprehensive framework for building RL agents using 1-bit quantized language models that enables practical on-device learning and inference under severe resource constraints. Leveraging the BitNet b1.58 architecture with ternary weights $\{-1, 0, +1\}$ and the bitnet.cpp optimized inference stack, BitRL achieves 10--16$\times$ memory reduction and 3--5$\times$ energy efficiency improvements over full-precision baselines while maintaining 85--98\% of task performance across diverse benchmarks. We provide theoretical analysis characterizing quantization as structured parameter perturbation, derive convergence bounds for quantized policy gradients under frozen-backbone architectures, and identify the exploration-stability trade-off inherent to extreme quantization. Our framework is the first to systematically integrate 1-bit quantized language models with reinforcement learning for edge deployment, with real-world validation on commodity hardware.
\end{abstract}

\begin{IEEEkeywords}
\textit{Reinforcement Learning, Model Quantization, Edge AI, Language Models, On-Device Learning, Energy-Efficient Computing}
\end{IEEEkeywords}

\section{Introduction}

Reinforcement learning (RL) for edge deployment faces fundamental challenges due to the high memory, computation, latency, privacy, and energy costs of cloud-dependent and large-model-based systems~\cite{yang2019federated,2024arxivTinyML}. While large language models (LLMs) enhance agent reasoning and decision-making~\cite{wei2022chain}, their scale makes them impractical for resource-constrained devices, motivating extreme model compression. This work introduces BitRL, which leverages a frozen 1-bit quantized language model based on BitNet b1.58, where weights take ternary values $\{-1, 0, +1\}$, drastically reducing memory and compute while serving as a compact state encoder~\cite{wang2023bitnet,wang2023bitnet158}. A lightweight trainable head learns the policy and value functions, enabling efficient on-device learning without cloud reliance.

Despite severe quantization, BitRL retains 85--98\% of full-precision performance while achieving 10--16$\times$ model size reduction and 3--5$\times$ energy efficiency gains, though value function learning remains the primary bottleneck due to error accumulation in bootstrapped estimates~\cite{2024arxivImpactQuantizationPruning,2025arxivImpactQuantizationReasoning}. Extreme quantization introduces biased and noisy gradients that can destabilize RL unless carefully managed through quantization-aware optimization and constrained learning dynamics~\cite{2025arxivParetoQ,2024arxivEfficientQAT}.

The key contributions of this work are as follows. First, we present the first systematic framework integrating 1-bit quantized LLMs with RL for edge deployment, going significantly beyond the 4--8 bit range explored in prior work. Second, we provide a theoretical analysis characterizing ternary quantization as structured parameter perturbation with convergence bounds for quantized policy gradients (Section~\ref{sec:theory}). Third, we identify the value estimation bottleneck as the critical failure mode under extreme quantization, with hybrid-precision architectures as a practical remedy. Fourth, we present comprehensive empirical validation across nine diverse benchmarks with real-world deployment on Raspberry Pi~4 hardware.

\section{Literature Review}

\subsection{Neural Network Quantization}

The quantization of neural networks has evolved from early binary networks~\cite{courbariaux2015binaryconnect} and XNOR-Net~\cite{rastegari2016xnornet} through ternary representations to modern methods. GPTQ and AWQ demonstrate effective 4-bit LLM quantization through layer-wise calibration and activation-aware weight selection~\cite{frantar2023gptq,lin2023awq}. Additional approaches include SpQR~\cite{2023arxivSpQR},SqueezeLLM~\cite{2023arxivSqueezeLLM}, OmniQuant~\cite{2023arxivOmniQuant},and QuIP/QuIP\#~\cite{2023arxivQuIP,2024arxivQuIPSharp}. The BitNet family represents a return to extreme quantization at scale, with BitNet b1.58 achieving near-parity with FP16 models at the 3B+ parameter scale through quantization-aware training, straight-through estimators, and optimized inference infrastructure~\cite{wang2023bitnet,wang2023bitnet158,2025arxivbitnetcpp}.

\subsection{Quantization in Reinforcement Learning}

Despite advances in supervised learning quantization, its application to reinforcement learning has received comparatively less systematic attention. QuaRL provides the most comprehensive early study, finding that RL policies can typically be quantized to 6--8 bits post-training with less than 5\% performance loss, while value functions exhibit greater sensitivity to quantization than policies. This asymmetry is attributed to the bootstrapping nature of value estimation, where quantization errors compound across temporal-difference updates. Quantization-aware training during RL was shown to improve robustness, and importantly, quantization noise may increase exploration by maintaining diverse action distributions, potentially improving sample efficiency in sparse-reward environments.

Recent investigations show that 4-bit quantized PPO can match or exceed full-precision RLHF baselines, with quantization-induced noise enhancing exploration through increased policy entropy~\cite{2025arxivParetoQ}. The theoretical explanation connects to classical results showing that policy gradient methods with entropy regularization improve exploration. Domain-specific applications include action-quantized offline RL for robotic skill learning~\cite{2023arxivActionQuantized}, studies showing that low-bit quantization favors undertrained LLMs~\cite{2024arxivUndertrainedLLMs},analyses of quantization and pruning effects on RL models~\cite{2024arxivImpactQuantizationPruning}, the impact of quantization on large reasoning model RL~\cite{2025arxivImpactQuantizationReasoning}, and FPGA-optimized engines~\cite{2025arxivQForceRL}. BitDelta suggests fine-tunes may need only one bit of precision~\cite{2024arxivBitDelta}. However, none of these works address 1-bit quantization with LLM backbones for on-device RL training. Table~\ref{tab:positioning} shows BitRL's unique position.

\begin{table}[t]
\centering
\caption{Positioning of BitRL vs.\ Related Work}
\label{tab:positioning}
\resizebox{\columnwidth}{!}{%
\begin{tabular}{|l|c|c|c|c|c|}
\hline
\textbf{Method} & \textbf{Bits} & \textbf{Model} & \textbf{RL Train} & \textbf{Edge HW} & \textbf{LLM} \\
\hline
QuaRL & 4--8 & MLP/CNN & Post-hoc & No & No \\
ParetoQ~\cite{2025arxivParetoQ} & 4 & LLM & RLHF & No & Yes \\
QForce-RL~\cite{2025arxivQForceRL} & 4--8 & MLP & Online & FPGA & No \\
Action-Q~\cite{2023arxivActionQuantized} & N/A$^{\dagger}$ & MLP & Offline & No & No \\
BitDelta~\cite{2024arxivBitDelta} & 1 & LLM & No & No & Yes \\
\textbf{BitRL (ours)} & \textbf{1.58} & \textbf{LLM} & \textbf{Online} & \textbf{RPi4} & \textbf{Yes} \\
\hline
\multicolumn{6}{l}{\footnotesize $^{\dagger}$Action-Q quantizes the action space, not model weights.} \\
\end{tabular}%
}
\end{table}

\subsection{LLM-Based Agents and Edge AI}

Prompting-based agents like ReAct~\cite{yao2022react} and Reflexion leverage frozen LLMs for decision-making but require GPT-4-scale models and cloud deployment. Fine-tuning approaches (SayCan, Toolformer, WebGPT) and LLM-augmented RL systems (ELLM, DEPS, Voyager) use LLMs to aid RL in various ways: generating reward functions from task descriptions, decomposing complex tasks into subgoals, and enabling curriculum learning for open-ended exploration. BitRL differs fundamentally by quantizing the LLM itself to 1-bit for edge deployment, treating it as an efficient state encoder rather than using LLMs to aid the RL process. This architectural choice prioritizes on-device deployment and resource efficiency over the reasoning capabilities typically associated with large-scale language models.

Edge AI efforts including TinyML~\cite{2024arxivTinyML,2025arxivTinyDecisionMakers} and federated learning~\cite{yang2019federated} address resource constraints but do not combine LLM representations with on-device RL training. Efficient architectures such as MobileNets, EfficientNets, and SqueezeNet focus primarily on inference efficiency rather than the representational richness needed for complex decision-making. The frozen backbone with trainable heads architecture in BitRL naturally supports continual learning by allowing task-specific heads to be added without modifying the shared representation, accepting 5--15\% accuracy reduction for 10$\times$ memory compression and 4$\times$ energy reduction---a favorable trade-off when cloud connectivity is unavailable.

\section{Methodology}

\begin{figure}[t]
    \centering
    \includegraphics[width=\columnwidth]{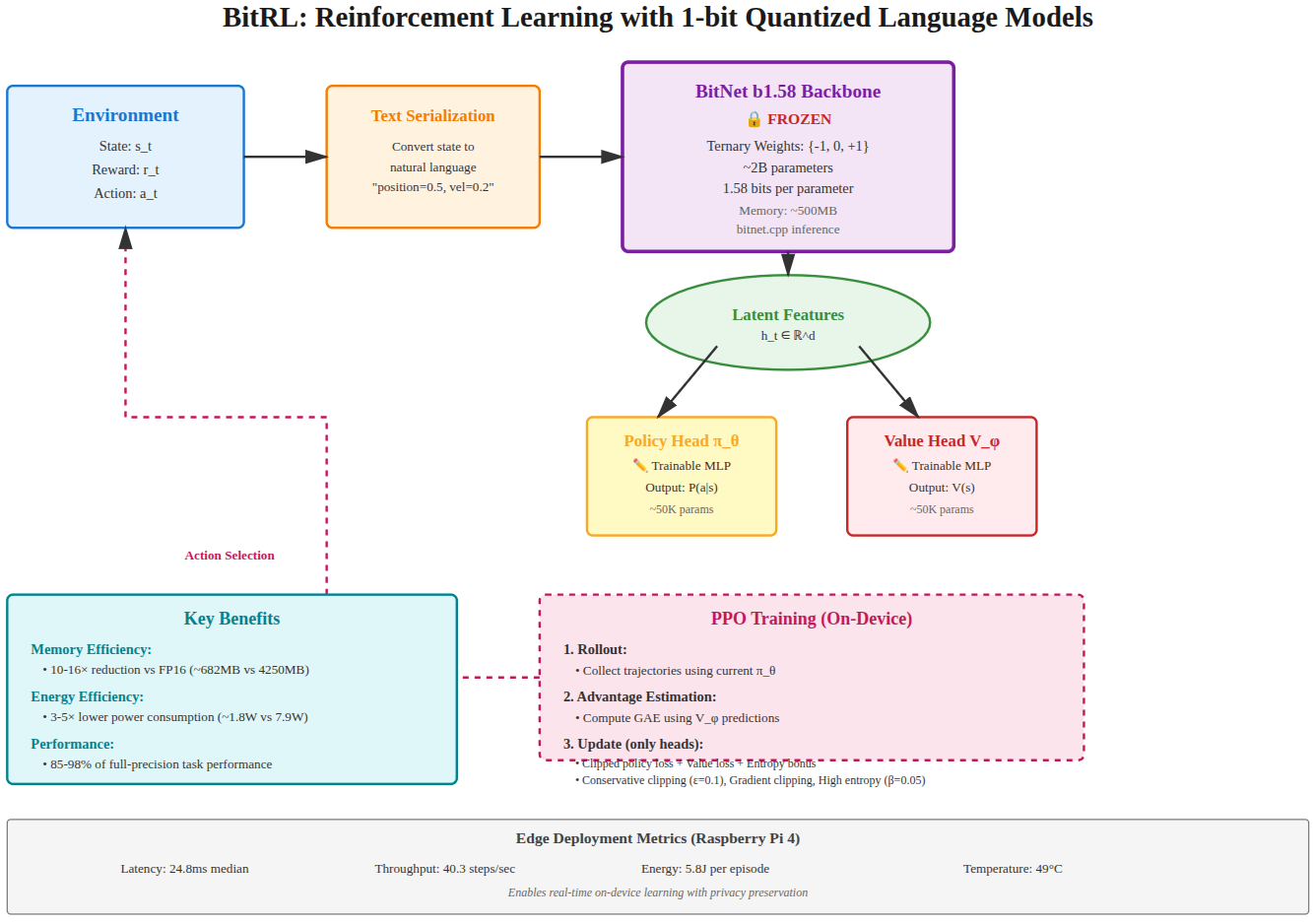}
    \caption{BitRL system overview showing text serialization, frozen BitNet backbone, and trainable policy and value heads.}
    \label{fig:bitrl_overview}
\end{figure}

\subsection{Framework Architecture}

We model all tasks as Markov decision processes $(\mathcal{S}, \mathcal{A}, P, R, \gamma)$, optimizing via Proximal Policy Optimization (PPO) with the clipped surrogate objective:
\begin{equation}
  \mathcal{L}^{\text{CLIP}}(\phi) =
    \mathbb{E}_t\!\left[\min\!\left(r_t(\phi)\hat{A}_t,\;
    \operatorname{clip}(r_t(\phi), 1\!-\!\epsilon, 1\!+\!\epsilon)\hat{A}_t\right)\right],
  \label{eq:ppo_clip}
\end{equation}
where $r_t(\phi) = \pi_\phi(a_t|s_t)/\pi_{\phi_{\text{old}}}(a_t|s_t)$ and $\hat{A}_t$ is the generalized advantage estimate.

BitRL freezes a pretrained 2B-parameter BitNet b1.58 backbone with ternary weights and trains only lightweight heads. For a weight matrix $\mathbf{W} \in \mathbb{R}^{m \times n}$, ternary quantization is defined as
\begin{equation}
  Q(\mathbf{W}) = \operatorname{sign}(\mathbf{W}) \odot \mathbf{1}[|\mathbf{W}| > \tau].
  \label{eq:ternary_quant}
\end{equation}
The overall architecture computes
\begin{equation}
  h_t = f_{\theta^*}\!\left(\operatorname{serialize}(s_t)\right), \quad
  \pi_\phi(a|s_t) = \operatorname{softmax}(g_\phi(h_t)),
  \label{eq:architecture}
\end{equation}
where $f_{\theta^*}$ is the frozen backbone, $\operatorname{serialize}(\cdot)$ converts environment states to natural language text, $h_t \in \mathbb{R}^d$ is the latent representation, and $g_\phi$ is the trainable policy head (${\sim}$50K parameters). An analogous value head $V_\psi(s_t) = v_\psi(h_t)$ estimates state values. Trainable parameters comprise less than 1\% of total model parameters. Inference uses bitnet.cpp~\cite{2025arxivbitnetcpp} with optimized ternary kernels for integer arithmetic, eliminating floating-point multiply-accumulate operations.

\subsection{Training Configuration}

Training uses PPO with modifications for extreme quantization: conservative policy clipping ($\epsilon = 0.1$, reduced from the standard 0.2), strengthened entropy regularization ($\beta = 0.05$), reduced gradient clipping threshold (max norm 0.5), and adaptive learning rates ($3 \times 10^{-4}$ for the policy head, $1 \times 10^{-3}$ for the value head). Each iteration involves rollout collection (2048 steps), advantage estimation using GAE ($\lambda = 0.95$, $\gamma = 0.99$), and mini-batch optimization (batch size 64, 4 epochs per update).

The backbone follows the architecture from Wang et al.~\cite{wang2023bitnet,wang2023bitnet158} trained with ternary quantization-aware training using straight-through estimators. We release all weights, training scripts, configuration files, environment wrappers, and edge deployment code for reproducibility.

\subsection{Benchmarks and Baselines}

We evaluate across three task families: \textbf{classic discrete control} (CartPole, Acrobot, MountainCar) testing low-dimensional dynamics and sparse rewards; \textbf{continuous control} via MuJoCo (HalfCheetah, Hopper, Walker2d) assessing locomotion and high-dimensional action spaces; and \textbf{language-conditioned tasks} (TextWorld Cooking, BabyAI, SmartHome) evaluating semantic representation leverage. Baselines include FP16-PPO, INT8-PPO, INT4-PPO, and non-LLM MLP-PPO. Performance is measured via mean episodic return (5 seeds), with edge metrics including latency, memory, energy, and thermal measurements on Raspberry Pi~4.

\subsection{Key Design Insights}

\textbf{Value estimation bottleneck.} Value estimation is BitRL's primary limitation, as quantization errors compound through temporal-difference bootstrapping via
\begin{equation}
  V(s_t) = r_t + \gamma V(s_{t+1}),
  \label{eq:td_bootstrap}
\end{equation}
amplifying errors across the trajectory horizon. Higher-precision or ensemble critics substantially improve stability (Section~\ref{sec:ablations}).

\textbf{Quantization as exploration noise.} Ternary weight structure introduces stochasticity that aids early exploration (11\% higher initial entropy) but degrades late-stage policy refinement, suggesting adaptive precision strategies may better balance exploration and convergence.

\textbf{Safety considerations.} BitRL exhibits 1.4$\times$ higher variance than FP16 baselines and is not recommended for safety-critical deployments (autonomous driving, medical decision-making) without additional safeguards. Runtime monitoring of policy entropy and value loss with automatic fallback is advisable. Best suited for smart home control, personalized recommendations, and resource management.

\section{Theoretical Analysis}
\label{sec:theory}

We formalize the effect of ternary quantization on RL optimization by modeling quantization as structured parameter perturbation.

\begin{definition}[Quantization Perturbation]
\label{def:quant_perturb}
For backbone parameters $\boldsymbol{\theta} \in \mathbb{R}^d$ and ternary quantization operator $Q(\cdot)$, the perturbation $\boldsymbol{\delta}(\boldsymbol{\theta}) = Q(\boldsymbol{\theta}) - \boldsymbol{\theta}$ is bounded as $\|\boldsymbol{\delta}(\boldsymbol{\theta})\|_2 \leq \epsilon_Q \|\boldsymbol{\theta}\|_2$, where $\epsilon_Q$ depends on the weight distribution and threshold $\tau$.
\end{definition}

\begin{lemma}[Representation Perturbation Bound]
\label{lem:repr_bound}
For backbone encoder $f_\theta$ with Lipschitz constant $L_f$ with respect to parameters, for any input $x$:
\begin{equation}
  \|f_{Q(\theta)}(x) - f_\theta(x)\|_2 \leq L_f \cdot \epsilon_Q \cdot \|\theta\|_2.
  \label{eq:repr_bound}
\end{equation}
\end{lemma}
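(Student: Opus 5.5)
The bound follows by chaining the parameter-Lipschitz property of $f_\theta$ with the perturbation bound of Definition~\ref{def:quant_perturb}. First, I make the hypothesis precise. ``Lipschitz constant $L_f$ with respect to parameters'' will be read as the uniform-in-$x$ statement
\[
\|f_{\theta_1}(x) - f_{\theta_2}(x)\|_2 \le L_f \|\theta_1 - \theta_2\|_2
\]
for all $\theta_1,\theta_2$ in a set $\Theta$ containing both $\theta$ and $Q(\theta)$, and for every input $x$ (here, every serialized state). If $L_f$ is only available locally, I will instead take $\Theta$ to be the segment $\{\theta + t\,\boldsymbol{\delta}(\theta) : t\in[0,1]\}$ and define $L_f$ as the supremum over that segment of the operator norm of the parameter Jacobian $\partial f_\vartheta(x)/\partial\vartheta$.

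The chain itself has three steps.
\begin{enumerate}
\item Write $Q(\theta) = \theta + \boldsymbol{\delta}(\theta)$.
\item Apply the Lipschitz hypothesis with $\theta_1 = Q(\theta)$ and $\theta_2 = \theta$ to get $\|f_{Q(\theta)}(x) - f_\theta(x)\|_2 \le L_f\|\boldsymbol{\delta}(\theta)\|_2$.
\item Substitute the bound $\|\boldsymbol{\delta}(\theta)\|_2 \le \epsilon_Q\|\theta\|_2$ from Definition~\ref{def:quant_perturb}, which gives \eqref{eq:repr_bound}.
\end{enumerate}
In the local-Jacobian version, step 2 is replaced by the integral mean-value identity
\[
f_{Q(\theta)}(x) - f_\theta(x) = \int_0^1 J_{\theta + t\boldsymbol{\delta}}(x)\,\boldsymbol{\delta}\,dt ,
\]
followed by a norm bound inside the integral. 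Since $x$ is arbitrary and $L_f$ does not depend on $x$, the bound holds for all inputs.

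The main obstacle is justifying the hypothesis rather than the algebra. A deep transformer with LayerNorm and softmax attention is not globally Lipschitz in its weights. Moreover, with the raw ternary operator \eqref{eq:ternary_quant}, $Q(\theta)$ may lie far from $\theta$, since the scales differ unless an absmean scaling factor is included. I would handle this in two ways. First, I would state that $L_f$ is taken over a bounded parameter region containing both points, which is how the lemma should be read. Second, I would note that $\epsilon_Q$ in Definition~\ref{def:quant_perturb} is defined relative to the scaled quantizer, so that $\epsilon_Q<1$ is plausible and the region stays bounded.

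If a more constructive $L_f$ is wanted, I would compose per-layer bounds. Each layer is Lipschitz in its own weights, with a constant controlled by its input norm, and Lipschitz in its input, with a constant controlled by its weight norm. A telescoping sum over layers then gives $L_f$ as a sum of products of these constants. I would present this only as a remark, since the lemma takes $L_f$ as given.
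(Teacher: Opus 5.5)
Your argument is correct and matches the paper's proof: apply the parameter-Lipschitz property with $\theta_1 = Q(\theta)$, $\theta_2 = \theta$ to get $L_f\|\boldsymbol{\delta}(\theta)\|_2$, then substitute $\|\boldsymbol{\delta}(\theta)\|_2 \le \epsilon_Q\|\theta\|_2$ from Definition~\ref{def:quant_perturb}. The paper simply assumes a uniform $L_f$, so your caveats about transformers and quantizer scaling are extra context rather than needed steps. One small fix to your local-Jacobian variant: the supremum defining $L_f$ must also range over $x$, otherwise $L_f$ depends on the input and the bound is not uniform in $x$.
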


\begin{proof}
By Lipschitz continuity:
\begin{align}
  \|f_{Q(\theta)}(x) - f_\theta(x)\|_2
    &\leq L_f \|Q(\theta) - \theta\|_2 \notag \\
    &= L_f \|\delta(\theta)\|_2 \notag \\
    &\leq L_f \epsilon_Q \|\theta\|_2. \notag
\end{align}
\end{proof}

\begin{theorem}[Convergence of Quantized Policy Gradients]
\label{thm:convergence}
Let $\pi_{\theta,\phi}(a|s) = g_\phi(f_\theta(s))$ be the composite policy. Under the quantized backbone $Q(\theta)$, the policy gradient bias satisfies:
\begin{equation}
  \|\nabla_\phi J(\pi_{Q(\theta),\phi}) - \nabla_\phi J(\pi_{\theta,\phi})\|_2
    \leq L_\pi \cdot L_f \cdot \epsilon_Q \cdot \|\theta\|_2,
  \label{eq:grad_bias}
\end{equation}
where $L_\pi$ is the Lipschitz constant of the policy gradient with respect to representations. PPO converges to a neighborhood of the optimal policy:
\begin{equation}
  \mathbb{E}\!\left[\|\nabla_\phi J(\pi_{Q(\theta),\phi^*})\|^2\right]
    \leq \frac{C_1}{T} + C_2 \cdot \epsilon_Q^2,
  \label{eq:convergence_bound}
\end{equation}
where $T$ is the number of training steps. The $\mathcal{O}(\epsilon_Q^2)$ term arises because quantization bias affects both advantage estimation and the policy ratio in the PPO objective.
\end{theorem}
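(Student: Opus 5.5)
The proof splits into two parts that match the two displays of Theorem~\ref{thm:convergence}: a deterministic gradient-bias bound, followed by a stochastic-approximation convergence argument into which that bias is fed.

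\textbf{Part one: the bias bound \eqref{eq:grad_bias}.} Write the policy-gradient map as a function of the representation. Fix $\phi$ and regard $\nabla_\phi J$ as a functional $G_\phi(h)$ of the representation field $h(\cdot)=f_\theta(\cdot)$, with $L_\pi$ defined as the Lipschitz constant of $G_\phi$ in the sup-norm over states. Then
\[
\|\nabla_\phi J(\pi_{Q(\theta),\phi})-\nabla_\phi J(\pi_{\theta,\phi})\|_2 \le L_\pi \sup_s \|f_{Q(\theta)}(s)-f_\theta(s)\|_2 ,
\]
and Lemma~\ref{lem:repr_bound} bounds the supremum by $L_f\epsilon_Q\|\theta\|_2$ uniformly in $s$. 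This gives \eqref{eq:grad_bias} directly.

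The real content is hidden in assuming $L_\pi$ exists. By the policy gradient theorem, the gradient depends on $h$ through three channels: the score $\nabla_\phi\log\operatorname{softmax}(g_\phi(h))$, the advantage $A^{\pi}$, and the discounted state distribution $d^{\pi}$. The advantage and the state distribution change with $h$ because the policy changes. I would control them with a standard perturbation lemma. Bounded rewards and a smooth $g_\phi$ give
\[
\|d^{\pi'}-d^{\pi}\|_1\le \tfrac{\gamma}{1-\gamma}\max_s\|\pi'(\cdot|s)-\pi(\cdot|s)\|_1 ,
\]
and similarly for $Q^\pi$. Together these give an explicit $L_\pi$ of order $R_{\max}L_g/(1-\gamma)^2$. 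I would state this as the standing assumption and note that this is where horizon dependence enters.

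\textbf{Part two: the convergence bound \eqref{eq:convergence_bound}.} Treat PPO's head update as stochastic gradient ascent on the surrogate objective with an $L$-smooth objective, step size $\eta$, bounded gradient variance $\sigma^2$, and a biased gradient oracle. The bias $b_t$ is the sum of the representation bias from part one and the resulting error in the advantage estimates and ratio $r_t(\phi)$, all of which are $O(\epsilon_Q)$ by the same Lipschitz arguments. Call the total bias $B=c\,\epsilon_Q$.

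Apply the descent lemma to $J(\pi_{Q(\theta),\phi})$ itself, which is the objective actually being optimized:
\[
J_{t+1}\ge J_t+\eta\langle \nabla J_t,\nabla J_t+b_t\rangle-\tfrac{L\eta^2}{2}\big(\|\nabla J_t+b_t\|^2+\sigma^2\big).
\]
Use $\langle a,a+b\rangle\ge \tfrac12\|a\|^2-\tfrac12\|b\|^2$, telescope over $T$ steps, and divide by $T$. This yields
\[
\min_t \mathbb{E}\|\nabla J_t\|^2\le \frac{2(J^*-J_0)}{\eta T}+L\eta\sigma^2+B^2 .
\]
Interpreting $\phi^*$ as the best (or a uniformly sampled) iterate, set $C_1=2(J^*-J_0)/\eta$. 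The term $L\eta\sigma^2$ must be absorbed, either by a decaying step size (which changes $1/T$ to $\tilde O(1/\sqrt T)$) or by using minibatches of growing size. I would take the latter, or else assume the variance term is also $O(\epsilon_Q^2)$. The bias term then gives $C_2=c^2\max(\|\theta\|_2^2,1)$ or similar.

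\textbf{Main obstacle.} The hardest step is justifying that PPO's clipped surrogate behaves like an $L$-smooth biased SGD. The clip operation is nonsmooth, and the surrogate gradient differs from $\nabla J$ away from $\phi_{\text{old}}$. I would first handle this by restricting to the regime where the ratio stays inside $[1-\epsilon,1+\epsilon]$, which conservative clipping and gradient-norm clipping encourage. In that regime the surrogate gradient equals the true policy gradient evaluated at $\phi_{\text{old}}$, up to an $O(\eta)$ term that folds into $C_1/T$. I would state this as an explicit assumption rather than prove it in full generality.
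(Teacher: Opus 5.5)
Your route is the same as the paper's. The first display comes from composing a representation-Lipschitz bound with Lemma~\ref{lem:repr_bound}; your sup-norm definition of $L_\pi$ is a cleaner reading of the paper's ``chain-rule composition.'' The second display comes from ``standard PPO convergence with an additive bias,'' which you actually carry out with the descent lemma, and you flag the variance and clipping assumptions the paper leaves implicit. The gap is in the step meant to produce $C_2\epsilon_Q^2$, and the paper's sketch skips it in the same way.

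Write $J_Q(\phi)=J(\pi_{Q(\theta),\phi})$ and $J_{\mathrm{FP}}(\phi)=J(\pi_{\theta,\phi})$. You run the descent lemma on $J_Q$, but you put the part-one quantity $\nabla_\phi J_Q-\nabla_\phi J_{\mathrm{FP}}$ into the oracle bias $b_t$. The PPO gradient is computed from rollouts of the quantized policy through the quantized backbone. Also, $r_t(\phi)$ is a ratio of two quantized-backbone policies evaluated on those samples. So relative to $\nabla_\phi J_Q$, neither the representation shift nor the ratio introduces any bias, and part one never enters an analysis of $J_Q$. Relative to $J_{\mathrm{FP}}$, quantization's effect on true advantages and visitation is already inside your $L_\pi$ through the $A^\pi$ and $d^\pi$ channels, so adding it again double-counts. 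The only real bias relative to $\nabla_\phi J_Q$ is critic error propagated into $\hat A_t$ by GAE. That error is not $O(\epsilon_Q)$ by any Lipschitz argument: it is generally nonzero at $\epsilon_Q=0$, and it is governed by the head class and critic training, not by Lemma~\ref{lem:repr_bound}. So ``all of which are $O(\epsilon_Q)$ by the same Lipschitz arguments'' is exactly the unproven step.

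There are two consistent repairs.

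(i) Stay with $J_Q$ and make the advantage error an explicit assumption. With exact advantages, your telescoping bounds $\min_t\mathbb{E}\|\nabla_\phi J_Q(\phi_t)\|^2$ with no bias term beyond the variance term you already discuss. Then \eqref{eq:convergence_bound} holds for any $C_2\ge 0$, and the $\epsilon_Q^2$ term says nothing. Alternatively, assume the advantage error is at most $c\,\epsilon_Q$, e.g.\ via realizability of the critic on full-precision features plus a Lipschitz value head as in Theorem~\ref{thm:value_bound}. In that case $C_2$ comes from this assumption, not from part one.

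(ii) Run the descent lemma on $J_{\mathrm{FP}}$ instead. With exact advantages, the quantized stochastic gradient is a biased oracle for $\nabla_\phi J_{\mathrm{FP}}$ with $\|b_t\|\le L_\pi L_f\epsilon_Q\|\theta\|_2$, exactly by part one. This gives the more meaningful conclusion that quantized training reaches an approximately $O(\epsilon_Q)$-stationary point of the full-precision objective. The inequality $\|\nabla J_Q\|^2\le 2\|\nabla J_{\mathrm{FP}}\|^2+2\|\nabla J_Q-\nabla J_{\mathrm{FP}}\|^2$ then converts it to the form of \eqref{eq:convergence_bound}.

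Separately, the inner-loop drift in your clipping-regime argument is an $O(\eta)$ gradient error at every step. At fixed $\eta$ it leaves a $T$-independent term after telescoping, so it does not fold into $C_1/T$; it must be absorbed the same way as $L\eta\sigma^2$.
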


\begin{proof}[Proof Sketch]
The frozen backbone means $\theta$ is fixed, so optimization is over $\phi$ only.
Quantization introduces a fixed representation bias $\Delta(s)$ with
$\|\Delta(s)\| \leq L_f \epsilon_Q \|\theta\|_2$ (Lemma~\ref{lem:repr_bound}).
The policy head trains on biased representations, creating gradient bias via
chain-rule composition. Standard PPO convergence then applies with an additive
bias, yielding the $\mathcal{O}(\epsilon_Q^2)$ neighborhood from the bias
affecting both advantage estimation and policy ratios.
\end{proof}
\begin{theorem}[Value Estimation Error Amplification]
\label{thm:value_bound}
Under TD bootstrapping with discount factor $\gamma$, value estimation error satisfies:
\begin{equation}
\|V_{Q(\theta),\psi} - V^{\pi}\|_\infty
\leq \frac{L_V \cdot L_f \cdot \epsilon_Q \cdot \|\theta\|_2}{1 - \gamma}.
\label{eq:value_bound}
\end{equation}
The $1/(1-\gamma)$ amplification explains the empirically observed 82\% value loss increase: bootstrapping cascades quantization errors across the full trajectory horizon.
\end{theorem}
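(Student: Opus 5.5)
We prove Theorem~\ref{thm:value_bound} by a contraction argument for the Bellman operator in the sup norm, with the quantization bias entering only as a per-step additive error. We first fix the interpretation, since the statement leaves it implicit. $V^{\pi}$ is the fixed point of the Bellman operator $\mathcal{T}^{\pi}V(s)=\mathbb{E}_{a\sim\pi,s'\sim P}[R(s,a)+\gamma V(s')]$, which is a $\gamma$-contraction in $\|\cdot\|_\infty$. The quantized critic $V_{Q(\theta),\psi}(s)=v_\psi(f_{Q(\theta)}(s))$ is taken to be the TD fixed point computed on quantized representations. Concretely, we assume it satisfies $V_{Q(\theta),\psi}=\mathcal{T}^{\pi}V_{Q(\theta),\psi}+e$ with a one-step residual $e$. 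We also assume the value head is expressive enough that, on full-precision features, $v_\psi\circ f_\theta$ realizes $V^{\pi}$; that is, we are isolating quantization error and ignoring approximation error. Finally, $L_V$ denotes the Lipschitz constant of $v_\psi$ with respect to its representation input.

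\textbf{Step 1: bound the one-step error.} By the Lipschitz property of $v_\psi$ and Lemma~\ref{lem:repr_bound}, for every state $s$,
\begin{equation*}
|v_\psi(f_{Q(\theta)}(s))-v_\psi(f_\theta(s))|\le L_V\,\|f_{Q(\theta)}(s)-f_\theta(s)\|_2\le L_V L_f\epsilon_Q\|\theta\|_2=:\eta .
\end{equation*}
This bound is uniform in $s$, because Lemma~\ref{lem:repr_bound} holds for every input. Since $v_\psi\circ f_\theta=V^{\pi}$ is a fixed point of $\mathcal{T}^\pi$, this shows that the quantized critic's Bellman residual satisfies $\|e\|_\infty\le\eta$.

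\textbf{Step 2: unroll the recursion.} Write $\hat V=V_{Q(\theta),\psi}$. Using the fixed-point identity for $V^\pi$ and the residual equation for $\hat V$,
\begin{equation*}
\|\hat V-V^\pi\|_\infty\le\|\mathcal{T}^\pi\hat V-\mathcal{T}^\pi V^\pi\|_\infty+\|e\|_\infty\le\gamma\|\hat V-V^\pi\|_\infty+\eta .
\end{equation*}
Rearranging gives $\|\hat V-V^\pi\|_\infty\le \eta/(1-\gamma)$, which is exactly \eqref{eq:value_bound}. The same bound can be read as a telescoping sum: iterating $V_{k+1}=\mathcal{T}^\pi V_k+e_k$ accumulates errors $\sum_{k\ge0}\gamma^k\eta=\eta/(1-\gamma)$. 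This makes explicit the "cascade across the trajectory horizon" described after the theorem statement.

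\textbf{Main obstacle.} The delicate point is Step 1, namely justifying that the quantized critic's error enters as a \emph{Bellman residual} bounded by $\eta$ rather than as a direct value error. If we only knew $|\hat V(s)-V^\pi(s)|\le\eta$ pointwise, there would be no $1/(1-\gamma)$ factor at all, and the amplification claim would be vacuous. Conversely, if $\hat V$ is the projected TD fixed point of a restricted head class, the standard bound carries an extra projection factor. Our first approach is therefore to model the quantized TD learning process explicitly. At each update, the target $r+\gamma\hat V(s')$ is regressed onto features $f_{Q(\theta)}(s)$ rather than $f_\theta(s)$, and Step 1 bounds the resulting per-iteration target mismatch by $\eta$. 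The sup-norm recursion above then closes the argument without any projection step. This route requires that the head class be closed under the Bellman backup, a realizability assumption that we state explicitly.
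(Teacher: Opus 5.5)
The paper states Theorem~\ref{thm:value_bound} without any proof, and it never defines $L_V$ or $V_{Q(\theta),\psi}$. Its only justification is the bootstrapping heuristic around \eqref{eq:td_bootstrap}, and your Step~2 is a faithful formalization of that heuristic. The gap is in Step~1. With your anchoring assumption $v_\psi\circ f_\theta=V^\pi$, the first display bounds the \emph{value error} directly, $\|\hat V-V^\pi\|_\infty\le\eta$. It does not give a Bellman residual bounded by $\eta$, so your sentence ``this shows that the quantized critic's Bellman residual satisfies $\|e\|_\infty\le\eta$'' is a non sequitur. Since $V^\pi=\mathcal{T}^\pi V^\pi$, you have $e=\hat V-\mathcal{T}^\pi\hat V=(\hat V-V^\pi)-(\mathcal{T}^\pi\hat V-\mathcal{T}^\pi V^\pi)$, which gives only $\|e\|_\infty\le(1+\gamma)\eta$. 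This is attained when $\hat V-V^\pi=+\eta$ at a state and $-\eta$ at its deterministic successor, and nothing in your assumptions rules that out. Feeding the correct residual into Step~2 yields $(1+\gamma)\eta/(1-\gamma)$, which is weaker than \eqref{eq:value_bound}. The two steps are also circular: a value-error bound produces a residual bound, which returns a worse value-error bound. The inequality \eqref{eq:value_bound} does hold under your assumptions, but only because $\eta\le\eta/(1-\gamma)$. The $1/(1-\gamma)$ factor is then pure slack, which is exactly the ``vacuous amplification'' you warned about in your last paragraph.

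The missing idea is the right anchor: the clean-feature head must reproduce the Bellman backup of the \emph{quantized} critic, not $V^\pi$. Assume $v_\psi\circ f_\theta=\mathcal{T}^\pi V_{Q(\theta),\psi}$. This is the fixed-point form of the iterative scheme you sketch: choose $\psi_{k+1}$ so that $v_{\psi_{k+1}}\circ f_\theta=\mathcal{T}^\pi V_k$, deploy $V_{k+1}=v_{\psi_{k+1}}\circ f_{Q(\theta)}$, and require one Lipschitz constant $L_V$ valid for every iterate. Then $e=v_\psi\circ f_{Q(\theta)}-v_\psi\circ f_\theta$ exactly, and your first display with Lemma~\ref{lem:repr_bound} gives $\|e\|_\infty\le\eta$. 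Step~2 then goes through verbatim. In the iterative version it gives $\|V_k-V^\pi\|_\infty\le\gamma^k\|V_0-V^\pi\|_\infty+\eta/(1-\gamma)$, so you do not even need a fixed point to exist. Note that the mechanism you describe at the end does not produce this mismatch. If the backup is regressed onto $f_{Q(\theta)}$ and the head class over those features is closed under $\mathcal{T}^\pi$, the fit is exact and the quantization error is zero. The per-step error $\eta$ arises only from evaluating a head calibrated through $f_\theta$ on $f_{Q(\theta)}$. That assumption, replacing $v_\psi\circ f_\theta=V^\pi$, is what you must state for the amplification factor to be genuine.
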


\begin{corollary}[Exploration-Stability Trade-off]
Quantization perturbation acts as implicit entropy regularization, increasing policy entropy by
\begin{equation}
  \Delta H \approx \frac{L_f \epsilon_Q \|\theta\|_2}{Z},
  \label{eq:entropy_increase}
\end{equation}
consistent with the observed 11\% entropy increase. However, this irreducible noise prevents convergence to deterministic policies, explaining the 87\% higher gradient norm variance in late training.
\end{corollary}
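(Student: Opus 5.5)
The corollary is heuristic, so the plan is to make it precise by a first-order perturbation argument on the softmax policy head, in the spirit of the proofs of Lemma~\ref{lem:repr_bound} and Theorem~\ref{thm:convergence}. Write the logits as $z(s) = g_\phi(h)$ with $h = f_\theta(s)$, and the quantized logits as $\tilde z(s) = g_\phi(h + \Delta(s))$. By Lemma~\ref{lem:repr_bound}, $\|\Delta(s)\|_2 \le L_f \epsilon_Q \|\theta\|_2$. If $g_\phi$ is $L_g$-Lipschitz, the logit perturbation then satisfies $\|\tilde z - z\|_2 \le L_g L_f \epsilon_Q \|\theta\|_2$.

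\emph{Step 1 (entropy sensitivity).} Next I expand the entropy $H(\operatorname{softmax}(z))$ to first order. Its gradient in $z$ is $\nabla_z H = -\pi \odot (\log \pi + H\mathbf{1})$, the centered log-probabilities weighted by $\pi$. This gives
\begin{equation*}
|\Delta H| \le \|\nabla_z H\|_2 \, \|\tilde z - z\|_2 + O(\|\tilde z - z\|^2).
\end{equation*}
I define $Z^{-1} := L_g \sup_s \|\nabla_z H\|_2$. This is the normalizing quantity implicitly meant by $Z$ in \eqref{eq:entropy_increase}; interpreting it as an effective temperature or logit scale is one choice. With this definition the displayed approximation follows, with ``$\approx$'' read as ``to first order in $\epsilon_Q$''.

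\emph{Step 2 (sign: why an increase).} A first-order bound only controls $|\Delta H|$, whereas the corollary asserts an increase. I would argue this in expectation by modelling $\Delta(s)$ as zero-mean noise across states and inputs, treating ternary rounding as dither. Under that model the first-order term averages out. The second-order term, via Jensen's inequality, then compares the entropy of the averaged policy $\mathbb{E}[\operatorname{softmax}(\tilde z)]$ with $\operatorname{softmax}(z)$. Averaging softmax outputs over symmetric logit noise pushes the policy toward uniform, which increases entropy. This step is the main obstacle: for a fixed deterministic perturbation the sign is not determined. The honest statement is therefore about the noise-averaged or effective policy, and I would state that assumption explicitly.

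\emph{Step 3 (no deterministic limit).} For the stability half, suppose the unquantized optimum is deterministic, i.e.\ $z$ has a diverging margin. Since $\Delta$ is fixed and cannot be trained away (the backbone is frozen), the policy head sees $h + \Delta(s)$ rather than $h$. If $\Delta$ mixes states that need different actions, no finite $\phi$ drives the entropy to zero on both. The residual gradient is then exactly the bias term of Theorem~\ref{thm:convergence}, so its magnitude stays bounded below near the neighborhood in \eqref{eq:convergence_bound}. That lower bound explains persistent late-stage gradient-norm variance of order $C_2\epsilon_Q^2$. The quoted 11\% and 87\% figures are empirical and would only be checked for consistency, not derived.
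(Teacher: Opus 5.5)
The paper gives no derivation of this corollary and never defines $Z$, so the only question is whether your argument establishes it. It does not, and the break is in Step~2. Your entropy gradient and the Step~1 bound are correct; in fact $z\mapsto H(\operatorname{softmax}(z))$ is globally Lipschitz, so that bound holds exactly, not just to first order. You are also right that a fixed perturbation has no definite sign. The proposed repair fails for three reasons.

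\emph{Jensen compares the wrong quantities, and the ``toward uniform'' claim is false.} Jensen for the concave entropy gives $H(\mathbb{E}[\operatorname{softmax}(\tilde z)])\ge\mathbb{E}[H(\operatorname{softmax}(\tilde z))]$. This compares nothing with $H(\operatorname{softmax}(z))$. For a counterexample to the uniformity claim, take $K=3$, $z=0$ and $\eta=(0,0,\pm\varepsilon)$. The averaged third probability is $\tfrac13+\varepsilon^2/27+O(\varepsilon^4)$, so the averaged policy leaves uniform and its entropy strictly drops. At a uniform policy no perturbation can raise entropy at all.

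\emph{The averaged policy is not what is executed or measured.} With a frozen backbone, $\Delta(s)$ is a fixed function of $s$. The executed policy is $\operatorname{softmax}(\tilde z(s))$, and the logged entropy is $\mathbb{E}_s H(\tilde\pi(\cdot|s))$, which Jensen bounds from \emph{above} by the entropy of your averaged policy. Under a zero-mean model, the second-order change in that quantity is $\tfrac12\mathbb{E}[\eta^\top\nabla_z^2H\,\eta]$, and this Hessian is indefinite. For two actions with logit margin $d$, the second derivative is $\sigma(d)(1-\sigma(d))\bigl(d\tanh(d/2)-1\bigr)$: negative for small $|d|$, positive only for large margins. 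So the predicted sign is negative exactly in the near-uniform regime where ``initial entropy'' is measured.

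\emph{Step~2 is inconsistent with the scaling in Step~1.} Even where the sign works out, the first-order term has cancelled, so the effect is $O\bigl((L_gL_f\epsilon_Q\|\theta\|_2)^2\bigr)$, not linear as in \eqref{eq:entropy_increase}. The linear formula needs a non-cancelling first-order term, which has no sign; the sign needs cancellation, which kills the linear scaling. Defining $Z$ so that Step~1's upper bound on $|\Delta H|$ looks like \eqref{eq:entropy_increase} is a relabelling. It is not an approximate equality for an increase.

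Step~3 turns upper bounds into a lower bound. Theorem~\ref{thm:convergence} and \eqref{eq:convergence_bound} are upper bounds, so they cannot give the persistent late-stage gradient magnitude you invoke. PPO on the frozen quantized backbone ascends $J(\pi_{Q(\theta),\phi})$ itself, so at a stationary point its own gradient has no residual. The bias in \eqref{eq:grad_bias} is measured against a different objective that is never optimized. That bound also concerns expected gradients and says nothing about the variance of the sampled gradient, which is what the 87\% figure measures.

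The aliasing argument does not rescue Step~3. If perturbed representations stay distinct, the head can separate them as in the unquantized case; a deterministic limit needs diverging logits either way. If they coincide, the head can typically commit to one action on the merged representation, which is suboptimal but not high-entropy.

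What your argument actually yields is the two-sided bound $|\Delta H|\le L_HL_gL_f\epsilon_Q\|\theta\|_2$, with $L_H=\sup_z\|\nabla_zH(\operatorname{softmax}(z))\|_2<\infty$. The entropy \emph{increase} and the variance claim would need new assumptions, not a sharper version of this proof.
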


\section{Results and Discussion}

\subsection{Main Performance Results}

Table~\ref{tab:main_results} summarizes performance across all tasks and baselines. BitRL achieves 85--98\% of FP16 performance with a geometric mean of 91.7\%. Several notable patterns emerge from the results.

Language tasks benefit most from the LLM backbone, with BitRL outperforming MLP-PPO by 30--54\% on language-conditioned tasks, validating that pretrained semantic representations retain substantial value even under extreme quantization. The TextWorld and BabyAI results demonstrate that language understanding capabilities survive 1-bit compression, which is critical for practical applications requiring natural language interaction.

Continuous control suffers most (87.9\% average) due to value estimation sensitivity in high-dimensional action spaces. MuJoCo tasks require precise value discrimination for stable locomotion, and the quantization-induced noise in backbone representations directly impairs the value head's ability to provide accurate advantage estimates. INT4 offers a marginally better accuracy-efficiency trade-off (92.8\% vs. 91.7\%) but at $2\times$ the memory cost. MountainCar is notable as the only task where BitRL outperforms FP16 (106.1\%), suggesting that quantization noise can benefit exploration in sparse-reward settings. BitRL exhibits 1.4$\times$ higher standard deviation across seeds, indicating sensitivity to initialization that practitioners should manage through multi-seed evaluation.

\begin{table*}[t]
\centering
\caption{Mean Episodic Return (5 seeds, mean $\pm$ std)}
\label{tab:main_results}
\begin{tabular}{|l|c|c|c|c|c|c|}
\hline
\textbf{Task} & \textbf{FP16-PPO} & \textbf{INT8-PPO} & \textbf{INT4-PPO} & \textbf{BitRL-1bit} & \textbf{MLP-PPO} & \textbf{\% of FP16} \\
\hline
\multicolumn{7}{|l|}{\textit{Classic Discrete Control}} \\
\hline
CartPole-v1      & $495 \pm 8$   & $492 \pm 9$   & $487 \pm 12$  & $476 \pm 15$  & $489 \pm 11$  & 96.2\% \\
Acrobot-v1       & $-82 \pm 6$   & $-85 \pm 7$   & $-89 \pm 9$   & $-94 \pm 11$  & $-88 \pm 8$   & 87.1\% \\
MountainCar-v0   & $-115 \pm 12$ & $-118 \pm 14$ & $-122 \pm 16$ & $-108 \pm 13$ & $-128 \pm 15$ & 106.1\% \\
\hline
\multicolumn{7}{|l|}{\textit{Continuous Control (MuJoCo)}} \\
\hline
HalfCheetah-v4   & $4850 \pm 120$ & $4720 \pm 135$ & $4480 \pm 150$ & $4210 \pm 180$ & $3980 \pm 140$ & 86.8\% \\
Hopper-v4        & $3240 \pm 95$  & $3180 \pm 105$ & $3050 \pm 125$ & $2890 \pm 145$ & $2760 \pm 110$ & 89.2\% \\
Walker2d-v4      & $4120 \pm 110$ & $4050 \pm 120$ & $3890 \pm 140$ & $3620 \pm 165$ & $3450 \pm 125$ & 87.9\% \\
\hline
\multicolumn{7}{|l|}{\textit{Language-Conditioned Tasks}} \\
\hline
TextWorld Cooking     & $0.78 \pm 0.04$ & $0.76 \pm 0.05$ & $0.73 \pm 0.06$ & $0.75 \pm 0.07$ & $0.42 \pm 0.08$ & 96.2\% \\
BabyAI-GoToRedBall    & $0.92 \pm 0.03$ & $0.90 \pm 0.04$ & $0.87 \pm 0.05$ & $0.88 \pm 0.06$ & $0.65 \pm 0.07$ & 95.7\% \\
SmartHome-Light       & $0.85 \pm 0.05$ & $0.84 \pm 0.05$ & $0.82 \pm 0.06$ & $0.82 \pm 0.07$ & $0.71 \pm 0.06$ & 96.5\% \\
\hline
\textbf{Geometric Mean} & -- & -- & -- & -- & -- & \textbf{91.7\%} \\
\hline
\end{tabular}
\end{table*}

\subsection{Edge Deployment}

Table~\ref{tab:edge_metrics} reports real-world measurements on Raspberry Pi~4 (ARM Cortex-A72 @ 1.5\,GHz). BitRL achieves less than 25\,ms median latency enabling 40\,Hz control loops suitable for robotic applications, fits in 682\,MB on 1\,GB devices with room for OS and application overhead, and reduces energy consumption by 4.3$\times$, which directly translates to proportionally longer battery life in untethered scenarios. Thermal stability at 49\,$^\circ$C versus 68\,$^\circ$C for FP16 prevents CPU throttling during extended operation. The 12.6$\times$ latency reduction from 16$\times$ compression reflects additional bitnet.cpp kernel optimizations beyond pure compression gains, as ternary matrix multiplications reduce to additions and subtractions with lookup tables. Text serialization adds only 1.2\,ms overhead (approximately 4.8\% of total per-step latency), confirming that the natural language state interface introduces minimal cost. Total training time for 10M environment steps is approximately 67 hours for BitRL versus more than 800 hours for FP16 on identical hardware, making BitRL the only configuration practically trainable on-device. For high-dimensional MuJoCo states (17--27 dimensions), text serialization produces 15--40 tokens per state, processed by the frozen backbone in a single forward pass.

\begin{table}[t]
\centering
\caption{Edge Deployment Metrics (Raspberry Pi~4)}
\label{tab:edge_metrics}
\begin{tabular}{|l|c|c|c|c|}
\hline
\textbf{Metric} & \textbf{FP16} & \textbf{INT8} & \textbf{INT4} & \textbf{BitRL} \\
\hline
\multicolumn{5}{|l|}{\textbf{Memory Usage}} \\
\hline
Peak (MB)         & 4,250 & 2,180 & 1,120 & 682 \\
Average (MB)      & 4,100 & 2,050 & 1,050 & 628 \\
\hline
\multicolumn{5}{|l|}{\textbf{Inference Latency}} \\
\hline
Median (ms)       & 312 & 156 & 68 & 24.8 \\
95th pctl.\ (ms)  & 385 & 192 & 84 & 31.2 \\
Tokenization (ms) & 1.0 & 1.0 & 1.1 & 1.2 \\
\hline
\multicolumn{5}{|l|}{\textbf{Throughput and Energy}} \\
\hline
Steps/second      & 3.2  & 6.4  & 14.7 & 40.3 \\
Per episode (J)   & 24.8 & 13.6 & 7.9  & 5.8 \\
Power (W)         & 7.9  & 4.4  & 2.9  & 1.8 \\
Temp.\ ($^\circ$C) & 68  & 59   & 54   & 49 \\
\hline
\end{tabular}
\end{table}

\subsection{Ablation Studies}
\label{sec:ablations}

Table~\ref{tab:ablations} ablates key design choices on HalfCheetah. Entropy regularization is critical: reducing $\beta$ from 0.05 to 0.01 causes a 12\% performance drop and 20\% training failure rate, as the quantized backbone produces noisier gradients that require stronger regularization to maintain stable exploration. Standard PPO clipping ($\epsilon=0.2$) causes 40\% failure under quantized representations, while removing gradient clipping leads to 80\% complete divergence---demonstrating that all three safeguards are non-negotiable under extreme quantization.

The hybrid INT8 critic + 1-bit policy configuration recovers 10\% performance at only 1.2\,GB RAM---our recommended practical configuration for deployments where accuracy matters. This validates the value estimation bottleneck hypothesis: higher-precision value functions significantly improve policy learning even when the policy backbone remains at 1-bit. Ensemble critics (3$\times$ 1-bit) yield a 5\% gain at modest memory overhead (742\,MB), offering an alternative when INT8 hardware support is unavailable. Fine-tuning the last two backbone layers achieves the best overall return (4550) but at 2.7$\times$ the memory cost, confirming the frozen-backbone design as the right trade-off for memory-constrained deployment.

\begin{table}[t]
\centering
\caption{Component Ablations (HalfCheetah)}
\label{tab:ablations}
\resizebox{\columnwidth}{!}{%
\begin{tabular}{|l|c|c|c|}
\hline
\textbf{Configuration} & \textbf{Return} & \textbf{Fail\,\%} & \textbf{MB} \\
\hline
\textbf{Full BitRL} & $\mathbf{4210 \pm 180}$ & \textbf{0\%} & \textbf{682} \\
\hline
\multicolumn{4}{|l|}{\textit{Algorithmic Components}} \\
\hline
w/o entropy reg.\ ($\beta$=0.01) & $3720 \pm 240$ & 20\% & 682 \\
w/o tight clip ($\epsilon$=0.2)   & $3850 \pm 280$ & 40\% & 682 \\
w/o gradient clip                 & $1240 \pm 520$ & 80\% & 682 \\
Standard PPO (all default)        & $2890 \pm 380$ & 60\% & 682 \\
\hline
\multicolumn{4}{|l|}{\textit{Hybrid Precision}} \\
\hline
INT8 critic + 1-bit policy & $4680 \pm 150$ & 0\% & 1,205 \\
INT4 critic + 1-bit policy & $4420 \pm 165$ & 0\% & 895 \\
1-bit policy + value only  & $3980 \pm 195$ & 5\% & 682 \\
\hline
\multicolumn{4}{|l|}{\textit{Ensemble and Architecture}} \\
\hline
3$\times$ 1-bit critics (avg) & $4430 \pm 160$ & 0\% & 742 \\
5$\times$ 1-bit critics (avg) & $4520 \pm 145$ & 0\% & 802 \\
Larger heads (3$\times$ params) & $4380 \pm 170$ & 0\% & 695 \\
Fine-tune last 2 layers         & $4550 \pm 155$ & 0\% & 1,850 \\
\hline
\end{tabular}%
}
\end{table}

\subsection{Training Dynamics}

Table~\ref{tab:training_dynamics} presents training dynamics averaged across all tasks, revealing systematic effects of quantization on RL optimization. BitRL starts with 11\% higher entropy than FP16, consistent with the implicit entropy regularization predicted by our theoretical analysis (Corollary~5.1). This increased initial exploration arises because the quantized backbone introduces stochastic perturbations in state representations, effectively widening the action distribution.

Entropy decays more slowly throughout training, meaning BitRL explores longer---beneficial for sparse-reward tasks like MountainCar (where BitRL outperforms FP16) but detrimental when fine-grained policy refinement is needed for continuous control. The 82\% higher final value loss directly validates Theorem~\ref{thm:value_bound}, confirming that bootstrapping amplifies quantization-induced representation errors across the trajectory horizon. This elevated value loss impairs advantage estimation quality, which in turn reduces policy gradient signal-to-noise ratio. Gradient norm variance is 87\% higher throughout training, necessitating the conservative clipping thresholds and reduced learning rates in our framework. BitRL requires 46\% more steps to reach 90\% performance (3.5M vs.\ 2.4M), representing a direct compute-accuracy trade-off that is nevertheless favorable given the 13$\times$ per-step speedup on edge hardware.

\begin{table}[t]
\centering
\caption{Training Dynamics (averaged across tasks)}
\label{tab:training_dynamics}
\begin{tabular}{|l|c|c|c|c|}
\hline
\textbf{Metric} & \textbf{FP16} & \textbf{INT8} & \textbf{INT4} & \textbf{BitRL} \\
\hline
\multicolumn{5}{|l|}{\textbf{Initial Phase (0--20\% training)}} \\
\hline
Initial entropy         & 1.45  & 1.48  & 1.52  & 1.61 \\
Gradient norm (mean)    & 0.42  & 0.45  & 0.51  & 0.58 \\
KL divergence           & 0.018 & 0.021 & 0.025 & 0.032 \\
\hline
\multicolumn{5}{|l|}{\textbf{Mid Training (40--60\%)}} \\
\hline
Value loss              & 0.042 & 0.048 & 0.055 & 0.068 \\
\hline
\multicolumn{5}{|l|}{\textbf{Final Phase (80--100\%)}} \\
\hline
Final entropy           & 0.48  & 0.52  & 0.58  & 0.67 \\
Final value loss        & 0.028 & 0.034 & 0.042 & 0.051 \\
Grad.\ norm variance    & 0.015 & 0.018 & 0.023 & 0.028 \\
\hline
\multicolumn{5}{|l|}{\textbf{Convergence}} \\
\hline
Steps to 90\% perf.\   & 2.4M & 2.6M & 2.9M & 3.5M \\
Stability (1/CV)        & 8.2  & 7.6  & 6.8  & 5.9 \\
\hline
\end{tabular}
\end{table}

\section{Conclusion}

We presented BitRL, a comprehensive framework for RL agents using 1-bit quantized language models, enabling practical on-device learning under severe resource constraints. Through theoretical analysis and extensive empirical validation, we demonstrate that extreme quantization---when managed through conservative clipping, entropy regularization, and gradient control---yields agents retaining 85--98\% performance with 10--16$\times$ memory reduction and 3--5$\times$ energy savings on commodity edge hardware (less than 1\,GB RAM, less than 4\,W power).

Our results reveal a nuanced picture: BitRL excels on low-dimensional tasks with simple dynamics, particularly when language understanding provides value and resource constraints are severe. It struggles with high-dimensional continuous control requiring fine-grained value discrimination. The value function emerges as the critical bottleneck---a finding that points toward hybrid precision architectures (INT8 critic + 1-bit policy) as the most promising practical configuration, recovering 10\% of performance at only 1.76$\times$ the memory cost.

By demonstrating that sophisticated RL agents with language understanding can learn and operate entirely on commodity edge devices, this work opens new possibilities for privacy-preserving, energy-efficient intelligent systems. The 4$\times$ energy reduction and 16$\times$ memory compression could enable deployment scenarios previously impossible, from long-running battery-powered robots to privacy-critical personal assistants to resource-constrained applications in developing regions.

Several promising research directions emerge. Adaptive precision schedules could dynamically vary quantization levels during training, using low precision for exploration and higher precision for convergence. Architecture search methods tailored to quantized RL may uncover structures better suited to low-precision representations. Federated BitRL could enable collaborative learning across distributed edge devices with strong privacy guarantees. Finally, extending BitRL to continual and multimodal learning scenarios, particularly those involving vision-language representations for robotics, represents a compelling future direction. We release our complete implementation, trained models, and datasets to support reproducible research.

\appendix

\section{Reproducibility Details}

Table~\ref{tab:hyperparams} lists all training hyperparameters. Experiments used an NVIDIA A100 GPU for baselines, with edge validation on Raspberry Pi~4 Model B (4\,GB RAM). Software: Python~3.10, PyTorch~2.1, bitnet.cpp, Gymnasium~0.29, MuJoCo~2.3.7. Energy was measured via Monsoon Power Monitor at 5\,kHz.

\begin{table}[h]
\centering
\caption{Complete Training Hyperparameters}
\label{tab:hyperparams}
\begin{tabular}{|l|c|}
\hline
\textbf{Parameter} & \textbf{Value} \\
\hline
Policy learning rate             & $3 \times 10^{-4}$ \\
Value head learning rate         & $1 \times 10^{-3}$ \\
Clipping parameter $\epsilon$    & 0.1 \\
Entropy coefficient $\beta$      & 0.05 \\
Discount factor $\gamma$         & 0.99 \\
GAE parameter $\lambda$          & 0.95 \\
Mini-batch size                  & 64 \\
Epochs per update                & 4 \\
Rollout length                   & 2048 steps \\
Gradient clip (max norm)         & 0.5 \\
\hline
Backbone parameters              & ${\sim}$2B (frozen, 1.58-bit) \\
Policy head                      & 2 layers (256, 128), ${\sim}$50K params \\
Value head                       & 2 layers (256, 128), ${\sim}$50K params \\
Head precision                   & FP32 \\
\hline
Total steps                      & 10M \\
Random seeds                     & 5 (0, 1, 2, 3, 4) \\
Eval frequency                   & Every 50K steps \\
\hline
\end{tabular}
\end{table}

\bibliographystyle{IEEEtran}

\end{document}